\let\MYcaption\@makecaption
\let\@makecaption\MYcaption
\newtheorem{definition}{Definition}
\newtheorem{theorem}{Theorem}
\newtheorem{proposition}{Proposition}
\newtheorem{remark}{Remark}
\newtheorem{assumption}{Assumption}
\DeclareMathOperator{\rank}{rank}
\DeclareMathOperator{\trace}{trace}
\newcommand{\q}{\bm{q}}
\newcommand{\qd}{\dot{\q}}
\newcommand{\qdd}{\ddot{\q}}
\newcommand{\btau}{\bm{\tau}}
\newcommand{\M}{\bm{M}}
\newcommand{\C}{\bm{C}}
\newcommand{\J}{\bm{J}}
\newcommand{\f}{\bm{f}}
\newcommand{\LL}{\bm{\Lambda}}
\newcommand{\Jbar}{\bar{\J}}
\newcommand{\Q}{\bm{Q}}
\newcommand{\x}{\bm{x}}
\newcommand{\xd}{\dot{\x}}
\newcommand{\xdd}{\ddot{\x}}
\renewcommand{\u}{\bm{u}}
\newcommand{\xerr}{\tilde{\x}}
\newcommand{\xderr}{\dot{\tilde{\x}}}
\newcommand{\xdderr}{\ddot{\tilde{\x}}}
\newcommand{\K}{\bm{K}}
\newcommand{\xx}{\mathbf{x}}
\newcommand{\yy}{\mathbf{y}}
\newcommand{\uu}{\mathbf{u}}
\begin{document}

\title{Control Barrier Functions for Singularity Avoidance \\ in Passivity-Based Manipulator Control}

\author{Vince Kurtz, Patrick M. Wensing, and Hai Lin
\thanks{
This work was supported in part by the National Science Foundation under Grants IIS-1724070, CNS-1830335, IIS-2007949, and CMMI-1835186. 
}\thanks{
Vince Kurtz and Hai Lin are with the Department of Electrical Engineering,
University of Notre Dame, Notre Dame, IN 46556 USA (e-mail: \texttt{vkurtz@nd.edu;
hlin1@nd.edu}).
}\thanks{
Patrick M. Wensing is with the Department of Aerospace and Mechanical
Engineering, University of Notre Dame, Notre Dame, IN 46556 USA (e-mail:
\texttt{pwensing@nd.edu}).
}
}

\maketitle
\thispagestyle{empty}

\begin{abstract}
    Task-space Passivity-Based Control (PBC) for manipulation has numerous appealing properties, including robustness to modeling error and safety for human-robot interaction. Existing methods perform poorly in singular configurations, however, such as when all the robot's joints are fully extended. Additionally, standard methods for constrained task-space PBC guarantee passivity only when constraints are not active. We propose a convex-optimization-based control scheme that provides guarantees of singularity avoidance, passivity, and feasibility. This work paves the way for PBC with passivity guarantees under other types of constraints as well, including joint limits
    and contact/friction constraints. The proposed methods are validated in simulation experiments on a 7 degree-of-freedom manipulator. 
\end{abstract}

\section{Introduction and Related Work}\label{sec:intro}

Passivity is a highly desirable property for robots collaborating with humans or working in delicate and uncertain environments. Not only do passivity-based controllers tend to be robust to modeling errors and disturbances \cite{folkertsma2017energy}, they also provide guarantees not available to other nonlinear controllers, as the feedback interconnection of passive systems is always passive \cite{spong2006robot}. 

Passivity-Based Control (PBC) has been successfully applied in many areas of robotics, including manipulation \cite{takegaki1981new,stramigioli1999passivity,albu2007unified}, legged locomotion \cite{henze2016passivity,mesesan2019dynamic,englsberger2020mptc,kurtz2020approximate}, and even autonomous driving \cite{rahnama2016passivation}. In the manipulation context, PBC is most naturally formulated in terms of task-space control \cite{englsberger2020mptc}, where the goal is to track a task-space (e.g., end-effector) reference. Like many task-space strategies, however, task-space PBC performs poorly in near-singular configurations like those in Figure \ref{fig:singular_configs}.

In this paper, we present an optimization-based task-space PBC strategy that guarantees both passivity and singularity avoidance. Our key insights are to constrain the evolution of the storage function in a convex quadratic program (QP) and to track a reference system rather than a predetermined reference trajectory. This allows us to modify the input to the reference system when necessary to avoid singular configurations.  

\begin{figure}
    \centering
    \begin{subfigure}{0.45\linewidth}
        \centering
        \includegraphics[width=0.9\linewidth]{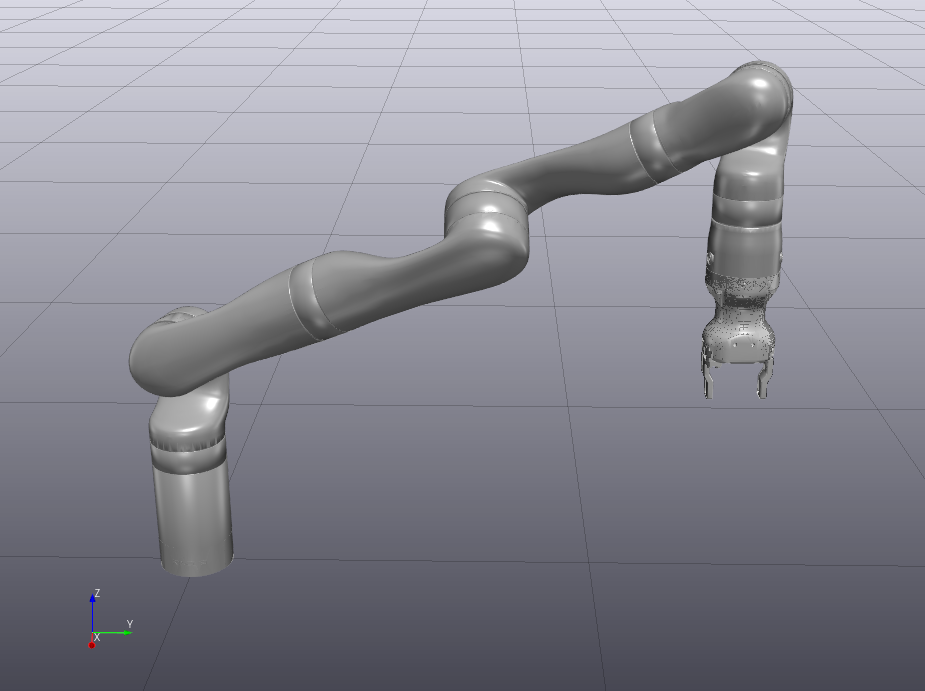} 
        \caption{}
        \label{fig:singular_config_1}
    \end{subfigure}
    \begin{subfigure}{0.45\linewidth}
        \centering
        \includegraphics[width=0.9\linewidth]{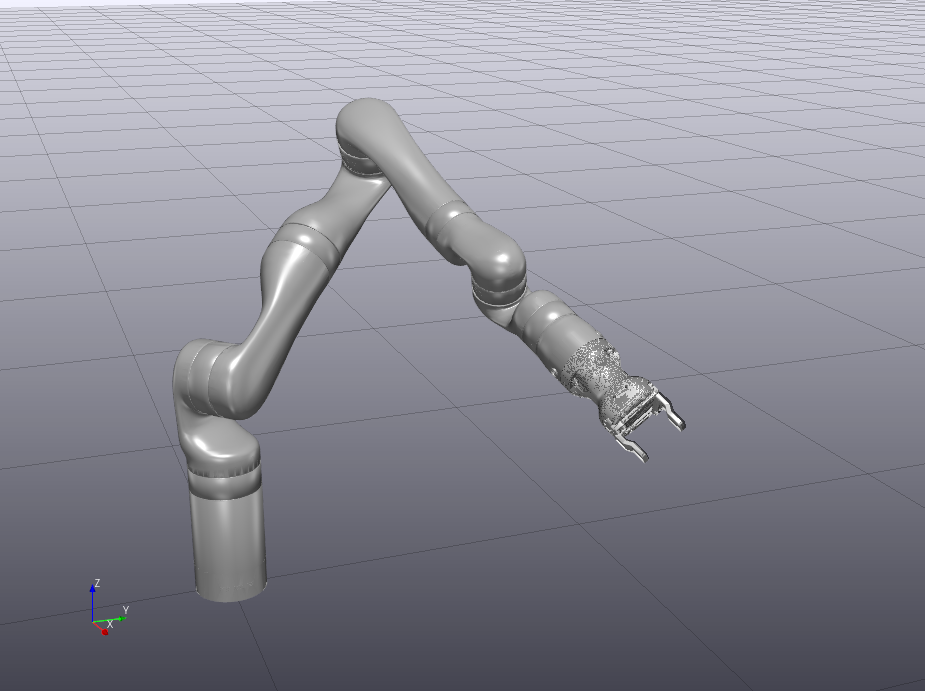} 
        \caption{}
        \label{fig:singular_config_2}
    \end{subfigure}
    \begin{subfigure}{0.9\linewidth}
        \centering
        \includegraphics[width=0.85\linewidth]{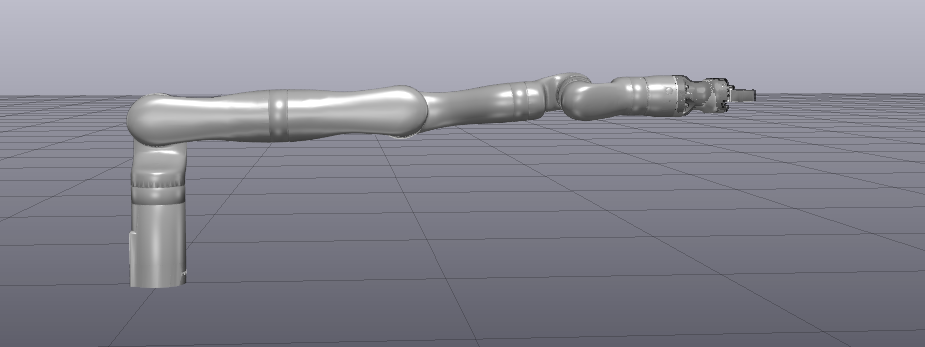} 
        \caption{}
        \label{fig:singular_config_3}
    \end{subfigure}
    \caption{Examples of singular configurations in the Kinova Gen3 robot arm. We present a method of avoiding such singularities while maintaining passivity guarantees by modifying the end-effector reference when necessary.}
    \label{fig:singular_configs}
\end{figure}

Common strategies for singularity avoidance include restricting the task-space reference \cite{marani2002real,kim2004general}, carefully selecting joint limits such that singularities are avoided \cite{cortez2019control}, and using a damped pseudoinverse of the task-space Jacobian \cite{wampler1986manipulator,henze2016passivity,mesesan2019dynamic,englsberger2020mptc}. Restricting the task-space reference a-priori can be difficult, since singular configurations can occur even when the end-effector is well within the reachable workspace (see Figure~\ref{fig:singular_configs}). Similarly, determining joint limits that exclude all possible singularities can be severely restrictive, especially for high degree-of-freedom (DoF) robots. Using a damped Jacobian pseudoinverse avoids many of the numerical issues associated with singular configurations, but at the price of losing formal stability guarantees and degraded performance. 

Of particular interest to us are control strategies based on the manipulability index \cite{yoshikawa1985manipulability}, which provides a smooth measure of how close a given configuration is to a singularity. Such strategies typically focus on maximizing the manipulability index in the null-space of a task-space controller \cite{nemec2000null,jin2017manipulability,su2019manipulability}. These null-space strategies use redundant degrees of freedom to keep the robot as far away as possible from singularity while matching a given end-effector reference. These techniques cannot guarantee singularity avoidance, however, as the task-space reference can always be chosen such that encountering a singularity is inevitable. For example, null-space strategies might be able to avoid the singularity shown in Figure \ref{fig:singular_config_2}, but not those shown in Figure \ref{fig:singular_config_1} and \ref{fig:singular_config_3}.

We propose an alternative approach to singularity avoidance, designing a Control Barrier Function (CBF) that keeps the manipulability index above a given threshold. This allows us to encode singularity avoidance as a linear constraint within a convex Quadratic Program (QP). This approach is closely related to \cite{marani2002real,kim2004general}, which use the manipulability index to avoid singularities in kinematic motion planning. 

Adding CBF constraints to a QP-based controller presents additional challenges, however. Specifically, existing methods for constrained passivity-based control can only guarantee passivity when the additional constraints (including singularity avoidance constraints as well as joint limits and friction/contact constraints) are not active \cite{henze2016passivity,mesesan2019dynamic,englsberger2020mptc}. This leads to our second contribution: we propose a QP-based controller that guarantees both passivity and constraint satisfaction. This QP is always feasible. Our key insight in this regard is to treat the reference signal as a system itself, the input of which can be adjusted to ensure feasibility. This is similar in spirit to the idea of reference governors \cite{kolmanovsky2014reference}.

Our primary contributions can be summarized as follows:
\begin{enumerate}
    \item We propose a CBF that enforces a minimum manipulability index. This CBF is a true CBF, i.e., the associated linear constraints are guaranteed to always be feasible.
    \item We propose a new QP-based strategy for task-space PBC that guarantees constraint satisfaction, passivity, and feasibility.
\end{enumerate}

To the best of our knowledge, this is the first task-space PBC strategy that can guarantee all three of these properties (passivity, constraint satisfaction, and feasibility).

The remainder of this paper is organized as follows: problem formulation and background information are presented in Section \ref{sec:background}. Main results are presented in Section \ref{sec:main_results}, and are supported by simulation experiments in Section \ref{sec:simulation}. We conclude with Section \ref{sec:conclusion}. Additionally, we provide an online interactive demonstration to accompany the paper \cite{colab}.

\section{Background}\label{sec:background}

\subsection{Problem Formulation}

In this paper, we consider torque control of a rigid manipulator arm, the dynamics of which can be written as
\begin{equation}\label{eq:wholebody_dynamics}
   \M(\q)\qdd + \C(\q,\qd)\qd + \btau_g(\q) = \btau,
\end{equation}
where $\q \in \mathbb{R}^n$ are joint positions, $\M$ is the positive-define mass matrix, $\C$ is the Coriolis matrix, $\btau_g$ are gravitational torques, and $\btau$ are applied control torques. A broad class of rigid-body systems has dynamics of this form \cite{featherstone2014rigid}. 

We assume that the applied torques $\btau$ are unconstrained:
\begin{assumption}\label{assumption:torque_limits}
    Any joint torques $\btau \in \mathbb{R}^n$ can be applied to the robot, i.e., $\btau$ is not bounded. 
\end{assumption}
Modern torque-controlled robots typically have high torque limits, making this assumption reasonable in practice. 

Rather than controlling joint angles $\q$ directly, we are interested in regulating the task-space state $\x(\q) \in \mathbb{R}^m$ of the robot, where $m \leq n$. In the manipulation context, the task-space is typically the end-effector pose (position and orientation%
\footnote{Note that the orientation manifold $SO(3)$ is non-Euclidean and so the notation $\x \in \mathbb{R}^m$ is slightly inconsistent in this case. For simplicity of presentation, however, we assume for the remainder of this paper that orientations are expressed as Euler angles and write orientations as belonging to $\mathbb{R}^3$. The proposed techniques generalize naturally to nonsingular orientation representations such as rotation matrices or quaternions.}%
), but the task-space can be any smoothly-varying quantity of interest, such as the center-of-mass or a particular link's position.

The task-space differential kinematics are characterized by the Jacobian
\begin{equation}\label{eq:jacobian}
   \J = \frac{\partial \x(\q)}{\partial \q},
\end{equation}
where $\xd = \J\qd$. When the Jacobian is full row-rank, i.e., $\rank(\J) = m$, joint velocities $\qd$ can be selected to correspond to any desired task-space velocity $\xd$. When the Jacobian is not full rank, however, certain task-space velocities may not be achievable, and we say the robot is in a \textit{singular configuration}. Examples of singular configurations are shown in Figure \ref{fig:singular_configs}. 

As discussed in Section \ref{subsec:tpbc} below, conventional formulations of task-space passivity-based control break down when the robot enters a singular or near-singular configuration. For this reason, we focus on control strategies that guarantee singularity avoidance. 

To maintain guarantees of both passivity and singularity avoidance, we consider the problem of tracking a \textit{reference system} rather than (as is typically the case) a reference trajectory. We assume that this reference system is governed by simple double integrator dynamics,
\begin{equation}\label{eq:reference_system}
    \xdd_r = \u_r,
\end{equation}
where $\x_r \in \mathbb{R}^m$ is a reference task-space state (e.g., end-effector pose) and $\u_r$ is the input to the reference system. 

We assume that a nominal reference input $\u_r^{nom}$ is available from a higher-level controller. We define position and velocity tracking errors as follows:
\begin{gather}
    \xerr = \x(\q) - \x_r, \\
    \xderr = \xd(\q,\qd) - \xd_r.
\end{gather}

With this in mind, our goal is to formulate a controller with the following properties:
\begin{enumerate}
    \item The closed-loop system is passive,
    \item The system remains singularity free ($\rank(\J(\q)) = m$).
\end{enumerate}

\subsection{Passivity}\label{subsec:passivity}

In this section, we present a formal definition of passivity and highlight some of the advantages of PBC. 

Consider a state-space dynamical system
\begin{equation}\label{eq:general_system}
    \Sigma = 
    \begin{cases}
        \dot{\xx} = f(\xx,\uu) \\
        \yy = g(\xx,\uu)
    \end{cases} 
\end{equation}
where $\xx$ is the system state, $\uu$ is the input, and $\yy$ is the output. Assume that $f : \mathbb{R}^k \times \mathbb{R}^p \to \mathbb{R}^k$ is locally Lipschitz and $g : \mathbb{R}^k \times \mathbb{R}^p \to \mathbb{R}^p$ is continuous. Note that system (\ref{eq:wholebody_dynamics}) can be written in this form with $\xx = [\q^T~\qd^T]^T$.

\begin{definition}
    The system (\ref{eq:general_system}) is said to be passive with input $\uu$ and output $\yy$ if there exists a continuously differentiable positive semidefinite storage function $V(\xx)$ such that
    \begin{equation}\label{eq:general_passivity_property}
        \dot{V} = \frac{\partial V}{\partial \xx}f(\xx,\uu) \leq \yy^T\uu, ~~~\forall~ (\xx,\uu) \in \mathbb{R}^k \times \mathbb{R}^p.
    \end{equation}
    \vspace{-1.0em} 
\end{definition}

Besides having close connections with other notions of nonlinear system stability \cite{khalil2002nonlinear}, passivity has important implications for safety and robustness. Often, the storage function $V$ represents some sort of system energy. In this case, the supply rate $\yy^T\uu$ corresponds to a power input port and the passivity property (\ref{eq:general_passivity_property}) states that the system energy can be increased only through this port \cite{folkertsma2017energy}. 

The following properties highlight the advantages of PBC in robotics. First, we have the well-known passivity theorem:
\begin{theorem}[\cite{khalil2002nonlinear}]
    The feedback interconnection of two passive systems is passive. 
\end{theorem}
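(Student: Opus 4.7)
The plan is to exhibit a composite storage function by summing the storage functions of the two subsystems, and then to verify that the feedback wiring causes the cross-coupling terms to cancel. First I would fix notation: let $\Sigma_1$ and $\Sigma_2$ be the two passive systems, each of the form (\ref{eq:general_system}), with states $\xx_i$, inputs $\uu_i$, outputs $\yy_i$, and continuously differentiable positive semidefinite storage functions $V_i(\xx_i)$ satisfying $\dot V_i \leq \yy_i^T \uu_i$. The standard feedback interconnection introduces external inputs $\uu_1^e, \uu_2^e$ of matching dimension and sets $\uu_1 = \uu_1^e - \yy_2$ and $\uu_2 = \uu_2^e + \yy_1$, with composite input $\uu = (\uu_1^e, \uu_2^e)$ and composite output $\yy = (\yy_1, \yy_2)$.

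Next I would propose $V(\xx_1, \xx_2) = V_1(\xx_1) + V_2(\xx_2)$ as the storage function for the interconnected system. Positive semidefiniteness and continuous differentiability of $V$ follow immediately from the corresponding properties of $V_1$ and $V_2$. Differentiating along trajectories of the composite system gives $\dot V = \dot V_1 + \dot V_2 \leq \yy_1^T \uu_1 + \yy_2^T \uu_2$. Substituting the feedback relations, the right-hand side becomes $\yy_1^T \uu_1^e + \yy_2^T \uu_2^e - \yy_1^T \yy_2 + \yy_2^T \yy_1$, and the cross terms cancel because $\yy_1^T \yy_2 = \yy_2^T \yy_1$ for real vectors. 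What remains is exactly $\yy^T \uu$, which is the inequality required by Definition~1.

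The only real subtlety, rather than a genuine obstacle, is to make the feedback interconnection precise: the input-output dimensions must be compatible so that $\yy_2$ can be summed with $\uu_1^e$ and $\yy_1$ with $\uu_2^e$, and the local Lipschitz and continuity hypotheses on the individual $(f_i,g_i)$ must be preserved under the algebraic loop so that the composite system still fits the template (\ref{eq:general_system}) to which Definition~1 applies. Once the conventional sign pattern of the feedback loop is adopted, the proof reduces to the single algebraic cancellation above, and no further integration or invariance argument is needed.
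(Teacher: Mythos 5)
Your proof is correct and is essentially the standard argument from the cited reference \cite{khalil2002nonlinear}: the paper itself states this theorem without proof, and the sum-of-storage-functions argument with cancellation of the cross terms $-\yy_1^T\yy_2 + \yy_2^T\yy_1$ under the conventional feedback signs is exactly how the result is established there. Your acknowledgment that well-posedness of the interconnection (resolvability of the algebraic loop and preservation of the Lipschitz/continuity hypotheses) must be assumed is the right caveat and matches the standard treatment.
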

This property does not hold for other notions of nonlinear system stability. Furthermore, any stable non-passive system can become unstable via interconnection with a passive system:

\begin{theorem}[\cite{folkertsma2017energy}]
    Given any non-passive system $\Sigma_1$, there always exists a passive system $\Sigma_2$ that gives rise to unbounded behavior of the feedback interconnection of $\Sigma_1$ and $\Sigma_2$.  
\end{theorem}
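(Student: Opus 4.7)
My plan is to use Willems' available-storage characterization of passivity: a system is passive if and only if the available storage $S_a(\xx_0) = \sup_{\uu,\, T \geq 0}\left(-\int_0^T \yy^T \uu\, dt\right)$ is finite for every initial state $\xx_0$. Non-passivity of $\Sigma_1$ therefore means there is some state $\xx_1(0)$ from which arbitrarily large amounts of energy can be extracted by a suitably chosen open-loop input. The theorem then reduces to realizing such an energy-extracting signal via closed-loop interaction with a carefully constructed passive $\Sigma_2$.

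The first concrete step is to pick $\Sigma_2$ as a lossless ``energy battery'' that absorbs whatever energy $\Sigma_1$ delivers. A natural candidate is the multivariable integrator
\begin{equation*}
    \dot{\xx}_2 = \uu_2, \qquad \yy_2 = \xx_2,
\end{equation*}
with coercive storage $V_2(\xx_2) = \tfrac{1}{2}\|\xx_2\|^2$ satisfying $\dot V_2 = \yy_2^T \uu_2$, so $\Sigma_2$ is passive (indeed lossless). Closing the loop with the standard negative feedback $\uu_1 = -\yy_2$, $\uu_2 = \yy_1$ yields the power-balance identity
\begin{equation*}
    V_2(T) - V_2(0) \;=\; -\int_0^T \yy_1^T \uu_1\, dt,
\end{equation*}
so every unit of energy extracted from $\Sigma_1$ is deposited into $V_2$. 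Because $V_2$ is coercive in $\xx_2$, showing $V_2(T) \to \infty$ immediately yields $\|\xx_2(T)\| \to \infty$ and hence unboundedness of the interconnection state.

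The hard part will be guaranteeing that the feedback loop actually produces an energy-extracting input. Non-passivity only asserts the existence of \emph{some} open-loop input that drains energy from $\Sigma_1$, whereas in closed loop the input is forced to be $-\xx_2$, determined by the dynamics of $\Sigma_2$ itself. I would address this in one of two ways. The direct route is to tailor $\Sigma_2$ more carefully: augment the integrator above with a finite-dimensional lossless pre-filter (for example, a skew-symmetric oscillator cascaded with the integrator) whose output, for an appropriate initial condition, reproduces an energy-extracting signal promised by the available-storage calculation, while the overall series interconnection remains passive. The indirect route is a contradiction argument in the spirit of Willems: if the feedback interconnection with every passive $\Sigma_2$ were bounded, one could define a finite ``pseudo-storage'' for $\Sigma_1$ as the supremum of absorbable energy over all admissible passive auxiliaries, contradicting the assumed infinitude of $S_a$. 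Either route reduces the theorem to the available-storage characterization; the remaining technical work is the realization of the bad input through a passive block, which is where I expect the main subtlety to lie.
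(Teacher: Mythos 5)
First, note that the paper does not prove this theorem: it is quoted with a citation to \cite{folkertsma2017energy}, so there is no in-paper argument to compare yours against. Judged on its own, your proposal sets up the right scaffolding (lossless integrator as an energy battery, the power-balance identity $V_2(T)-V_2(0) = -\int_0^T \yy_1^T\uu_1\,dt$, coercivity of $V_2$ converting unbounded absorbed energy into unbounded state) but it does not close the one step you yourself flag as the crux, and both routes you sketch for closing it fail as stated. Non-passivity, via the available storage, only guarantees that \emph{some open-loop} input extracts unbounded energy from $\Sigma_1$; in the interconnection the input is $\uu_1 = -\xx_2$, slaved to $\Sigma_2$'s dynamics, and nothing in your argument shows the closed loop ever extracts a single joule.

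Concretely: the ``direct route'' of a lossless pre-filter whose free response reproduces the energy-extracting signal is obstructed by passivity itself. A system whose output is a nontrivial autonomous signal independent of its input cannot be passive --- the inequality $\dot V_2 \leq \yy_2^T\uu_2$ must hold for \emph{all} $\uu_2$, which forces $\yy_2 = 0$ whenever the output does not genuinely couple to the input --- so you cannot smuggle an arbitrary ``bad'' input signal into the loop through a passive block's initial condition. Once the coupling is genuine, $\yy_1$ perturbs the filter's output away from the intended signal, and you would additionally need the extracted energy to depend continuously (and uniformly in the horizon $T$) on the input, none of which is addressed; nor is an arbitrary $L^2$ signal the free response of any finite-dimensional lossless system in the first place. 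The ``indirect route'' is circular: the supremum of energy absorbable through passive auxiliaries is only a lower bound surrogate for the available storage, and its finiteness contradicts $S_a = +\infty$ only if every open-loop extraction can be (approximately) realized by a passive auxiliary --- which is exactly the realization result you are trying to avoid proving. There is also a secondary gap: the paper's definition of passivity requires a $C^1$ storage function, while Willems' available-storage characterization neither guarantees differentiability nor applies without reachability-type hypotheses, so even your opening equivalence needs qualification. As it stands the proposal is an honest reduction of the theorem to its hardest step, not a proof of it.
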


In the manipulation context, $\Sigma_2$ might consist of objects in the environment, humans, or other robots. This theorem states that if the controlled robot is not passive, a passive environment can always be constructed that destabilizes the system. For this reason, passivity is often considered a necessary condition for safety \cite{folkertsma2017energy,henze2016passivity,mesesan2019dynamic,englsberger2020mptc}. 

\subsection{Task-Space Passivity-Based Control}\label{subsec:tpbc}

In this section, we introduce the standard method of (unconstrained) task-space passivity-based control. This method is typically presented in terms of tracking a reference trajectory rather than a reference system: note that these two formulations are equivalent if we assume $\u_r$ to be fixed. 

Task-space passivity-based control begins with the definition of a storage function,
\begin{equation}\label{eq:storage function}
    V(\q,\qd) = \frac{1}{2}\xderr^T\LL\xderr + \frac{1}{2}\xerr^T\K_P\xerr,
\end{equation}
where $\LL = (\J\M^{-1}\J^T)^{-1}$ is the task-space inertia matrix and $\K_P = \K_P^T$ is a positive definite gain matrix. Note that this storage function can be thought of as the energy in the error between the actual task-space state and the reference system. The first term captures kinetic energy (task-space mass $\LL$ times error velocity $\xderr$ squared) and the second term is a potential energy. 

Differentiating the storage function, we have
\begin{align}
    \dot{V} &= \frac{1}{2}\xderr^T\dot{\LL}\xderr + \xderr^T\LL\xdderr + \xderr^T\K_P\xerr \\
    &= \xderr^T\left( \LL\Q\Jbar\xderr + \LL(\xdd - \xdd_r) + \K_P\xerr \right) \\
    &= \xderr^T\left( \LL\Q\Jbar\xderr - \LL\xdd_r + \LL(\J\qdd + \dot{\J}\qd) + \K_P\xerr \right),\label{eq:vdot}
\end{align}
where $\Q = \J\M^{-1}\C - \dot{\J}$, $\Jbar = \M^{-1}\J^T\LL$ is a pseudoinverse of $\J$, and $\dot{\LL} - 2\LL\Q\Jbar$ is skew-symmetric \cite{spong2006robot}. 

If task-space forces $\f = \Jbar^T\btau$ are chosen such that
\begin{equation}\label{eq:controller}
    \f = \LL\xdd_r + \Jbar^T\btau_g + \LL\Q(\qd - \Jbar\xderr) - \K_P\xerr - \K_D\xderr,
\end{equation}
where $\K_D = \K_D^T$ is a positive definite damping matrix, then
\begin{equation}
    \dot{V} = - \xderr^T \K_D \xderr \leq 0.
\end{equation}

Furthermore, if the system is subject to external disturbances $\btau_{ext}$, i.e., the system dynamics are $\M\qdd + \C\qd + \btau_g = \btau + \btau_{ext}$, then
\begin{equation}\label{eq:passivity_wrt_disturbance}
   \dot{V} = - \xderr^T \K_D \xderr + \xderr^T \Jbar^T \btau_{ext} \leq \xderr^T \f_{ext},
\end{equation}
where $\f_{ext} = \Jbar^T\btau_{ext}$ are the task-space forces from $\btau_{ext}$.

This passivity property characterizes the energy of the closed-loop system. Noting that $\xderr^T\f_{ext}$ has units of power,  (\ref{eq:passivity_wrt_disturbance}) characterizes how the energy $V(\q,\qd)$ increases (due to disturbances $\f_{ext}$) and decreases (due to the dissipation $-\xderr^T\K_D\xderr$).

Despite these advantages, the method presented above has two important limitations: (1) it is difficult to include constraints on the system while maintaining passivity and (2) tracking performance degrades rapidly in near-singular configurations, since joint torques $\btau$ must become extremely large to be consistent with (\ref{eq:controller}) . 

Most robotic systems are subject to constraints, e.g., joint angle and velocity limits, and friction and contact constraints. A standard approach for addressing such constraints in the context of passivity-based control is to replace the closed-form controller (\ref{eq:controller}) with the solution of a convex QP that imposes such constraints and is re-solved at each timestep. This QP takes the form
\begin{equation}\label{eq:standard_pbc_qp}
\begin{aligned}
    \min_{\qdd,\btau} ~& \|\Jbar^T\btau - \f^{des}\|^2 \\
    \text{s.t. } & \M\qdd + \C\qd + \btau_g = \btau \\
                 & \text{Additional Constraints},
\end{aligned}
\end{equation}
where $\f^{des}$ are desired task-space forces given by (\ref{eq:controller}) and ``Additional Constraints'' might include contact, joint limit, or other constraints \cite{cortez2020correct,henze2016passivity,mesesan2019dynamic}. If $\Jbar^T\btau = \f^{des}$ (i.e., the additional constraints are not active), then all of the passivity properties outlined above hold. If these constraints become active, however, the controller will attempt to match $\f^{des}$ but any guarantees of passivity (and associated safety and robustness) are lost. 

In this paper, we will present an alternative optimization-based method for including constraints, where passivity is guaranteed for \textit{any} solution to the optimization, even when additional constraints are active. 

The second issue is singular configurations. In a singular configuration, $\J$ is no longer full-rank, and arbitrary task-space forces $\f$ cannot be applied. In the case of a convex optimization formulation like (\ref{eq:standard_pbc_qp}), this corresponds to the optimization problem becoming ill-conditioned. In practice, this means that extreme joint torques are applied as the robot approaches a singular configuration.

A common method of avoiding this issue is to use a damped pseudoinverse of the Jacobian 
\begin{equation}\label{eq:damped_jacobian_pinv}
    \J^T(\J\J^T + \delta \bm{I})^{-1}  
\end{equation}
instead of $\Jbar$, where $\delta > 0$ is a small constant \cite{wampler1986manipulator}. This damped pseudoinverse prevents the numerical conditioning issues associated with singular configurations, but passivity can no longer be formally guaranteed and tracking performance degrades near singularities \cite{henze2016passivity}. 

In this paper, we use CBFs to design a controller that allows the robot to avoid singular and near-singular configurations altogether while maintaining passivity guarantees. 

\subsection{Manipulability Index}\label{subsec:manipulability}

To avoid singularities, we need a smoothly varying measure of how close the given configuration $\q$ is to a singular configuration. The \textit{manipulability index} \cite{yoshikawa1985manipulability} provides such a measure. The manipulability index $\mu(\q)$ is defined as
\begin{equation}\label{eq:manipulability}
    \mu(\q) = \sqrt{\det \left[\J(\q) \J(\q)^T\right]},
\end{equation}
and is zero if $\q$ is singular and positive otherwise. 

To understand the manipulability index, note that $\mu$ can also be written as
\begin{equation*}
    \mu(\q) = \sigma_1\sigma_2\dots\sigma_m,
\end{equation*}
where $\sigma_i$ are the singular values of $\J$ (i.e., diagonal elements of $\bm{\Sigma}$ where $\J = \bm{U}\bm{\Sigma}\bm{V}^T$). If $\J$ is not full rank, then at least one $\sigma_i = 0$ and so $\mu = 0$.

Importantly, the manipulability index $\mu(\q)$ varies smoothly with $\q$ over $\mu(\q) > 0$. Indeed, we can think of the manipulability as a unique sort of task-space characterized by its own Jacobian \cite{marani2002real}
\begin{equation}
    \J_\mu(\q) = \frac{\partial \mu(\q)}{\partial \q},
\end{equation}
such that $\dot{\mu} = \J_{\mu} \qd$. The elements of this Jacobian are
\begin{equation}\label{eq:mu_gradient}
    \frac{\partial \mu(\q)}{\partial q_i} = \mu(\q) \trace\left[ \frac{\partial \J}{\partial q_i} \J^\dagger \right],
\end{equation}
where $q_i$ are the elements of $\q$ and $\J^{\dagger} = \J^T(\J \J^T)^{-1}$ is the Moore-Penrose pseudoinverse of $\J$ \cite{marani2002real}.

\subsection{Exponential Control Barrier Functions}\label{subsec:ecbf}

Control Barrier Functions are a way of designing (linear) constraints that ensure forward invariance of a safe set\footnote{While most CBF formulations focus on control-affine systems, we restrict our presentation to systems of the form (\ref{eq:wholebody_dynamics}) for simplicity.} $\mathcal{C} = \{ \q,\qd \mid h(\q,\qd) \geq 0 \}$ \cite{ames2019control}.

The basic idea is relatively simple: $\dot{h} \geq 0$ everywhere on the boundary of $\mathcal{C}$ is sufficient for forward invariance of $\mathcal{C}$ \cite{ames2019control}. In the case that $h$ has relative degree 1 ($\dot{h}$ is a function of $\qdd$, and thus implicitly also of $\btau$) a constraint of the form 
\begin{equation}
    \dot{h}(\q,\qd,\qdd) \geq - \alpha(h(\q,\qd))
\end{equation}
enforces forward invariance of $\mathcal{C}$, where $\alpha(\cdot)$ is any class-$\mathcal{K}$ function\footnote{A function $\alpha : \mathbb{R}^+ \to \mathbb{R}^+$ is in class-$\mathcal{K}$ if it is continuous, strictly increasing, and $\alpha(0) = 0$.}. For systems of the form (\ref{eq:wholebody_dynamics}), such constraints are linear in $\qdd$ and can be included in a QP like (\ref{eq:standard_pbc_qp}). 

If $h$ has higher relative degree, i.e., $\qdd$ enters in the higher-order derivatives of $h$, we need to use Exponential CBFs \cite{nguyen2016exponential} to ensure forward invariance of $\mathcal{C}$. For example, consider the case when $h$ is a function of $\q$ only. In this case, a sufficient condition for forward invariance of $\mathcal{C}$ is given by
\begin{equation}\label{eq:ecbf}
   \ddot{h}(\q,\qd,\qdd) \geq - \K_\alpha \begin{bmatrix} h(\q) \\ \dot{h}(\q,\qd) \end{bmatrix},
\end{equation}
where $\K_\alpha$ is a gain matrix that must satisfy certain regulatory conditions \cite[Theorem 2]{nguyen2016exponential}. The constraint (\ref{eq:ecbf}) is also linear in $\qdd$ for systems of the form (\ref{eq:wholebody_dynamics}), and can be enforced via QP.

The key issue when designing CBFs, regardless of relative degree, is the feasibility of the constraint (\ref{eq:ecbf}). For an arbitrary candidate barrier function $h$, there is no guarantee that a $\qdd$ satisfying (\ref{eq:ecbf}) can always be found. In this sense, CBFs are analogous to Control Lyapunov Functions: if we can find a true barrier function (or Lyapunov function) this is a useful and powerful result, but we cannot presume to use just any function as a barrier function (or a Lyapunov function). 

For further details on CBFs and ECBFs, we refer the interested reader to \cite{ames2019control} and references therein. 

\section{Main Results}\label{sec:main_results}

In this section we present our main results. First, we show how the manipulability index can be used to formulate a (true) ECBF that guarantees singularity avoidance. Second, we present a convex optimization-based controller that allows us to enforce these singularity avoidance constraints while maintaining passivity guarantees. 

\subsection{Control Barrier Functions For Singularity Avoidance}

In this section, we show how the manipulability index (\ref{eq:manipulability}) can be used to construct (linear) ECBF singularity avoidance constraints on the system (\ref{eq:wholebody_dynamics}).

Recalling that $\mu(\q) = 0$ only when the robot is in a singular configuration and $\mu(\q) > 0$ otherwise, we propose the barrier function
\begin{equation}\label{eq:our_bf}
    h(\q) = \mu(\q) - \epsilon,
\end{equation}
where $\epsilon > 0$ is a user-determined constant parameter characterizing the minimum ``distance'' to maintain from any singularities. Clearly, $h(\q) \geq 0 \implies \mu(\q) > 0 \implies \q$ is nonsingular. 

As discussed in Section \ref{subsec:manipulability} above, $\mu(\q)$, and thus also $h(\q)$, is smooth and of relative degree 2. This allows us to constrain the system to the set of states such that $\mu(\q) \geq \epsilon$ by applying the linear constraint
\begin{equation}\label{eq:our_cbf_constraint}
   \ddot{h}(\q,\qd,\qdd) \geq - \K_\alpha \begin{bmatrix} h(\q) \\ \dot{h}(\q,\qd) \end{bmatrix},
\end{equation}
where
\begin{gather*}
    \dot{h}(\q,\qd) = \J_\mu \qd, \\
    \ddot{h}(\q,\qd,\qdd) = \J_\mu \qdd + \dot{\J}_\mu \qd,
\end{gather*}
and $\K_\alpha$ satisfies the conditions of \cite[Theorem 2]{nguyen2016exponential} (closed-loop system matrix Hurwitz and total negative). 

This $h(\q)$ is a true ECBF, meaning (\ref{eq:our_cbf_constraint}) always has a solution for any $\q,\qd$ such that $h(\q) \geq 0$, as shown in the following proposition:
\begin{proposition}\label{prop:ecbf}
    The barrier function (\ref{eq:our_bf}) is an ECBF for the system (\ref{eq:wholebody_dynamics}). 
\end{proposition}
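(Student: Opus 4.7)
The plan is to verify that the linear inequality (\ref{eq:our_cbf_constraint}) always admits a solution $\qdd\in\mathbb{R}^n$ whenever $h(\q)\geq 0$. Since Assumption~\ref{assumption:torque_limits} allows arbitrary $\btau$ and $\M(\q)$ is invertible, feasibility in $\qdd$ is equivalent to feasibility in $\btau$ via $\qdd=\M^{-1}(\btau-\C\qd-\btau_g)$. Substituting $\ddot{h}=\J_\mu\qdd+\dot{\J}_\mu\qd$, the CBF constraint collapses to the single scalar linear inequality
\[
\J_\mu(\q)\,\qdd \;\geq\; -\dot{\J}_\mu(\q,\qd)\,\qd \;-\; \K_\alpha\!\begin{bmatrix} h(\q)\\ \dot{h}(\q,\qd)\end{bmatrix},
\]
which is feasible in $\qdd$ whenever the coefficient row vector $\J_\mu(\q)$ is nonzero: one may simply take $\qdd = c\,\J_\mu(\q)^T$ and send $c\to+\infty$, dominating any right-hand side. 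So the proposition reduces to showing $\J_\mu(\q)\neq 0$ on $\{\q:h(\q)\geq 0\}$.

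For this second step, I would invoke the closed-form expression (\ref{eq:mu_gradient}) to factor the gradient as $\J_\mu(\q)=\mu(\q)\,\bm{g}(\q)^T$ with $g_i(\q)=\trace[(\partial\J/\partial q_i)\,\J^\dagger]$. The hypothesis $h(\q)\geq 0$ gives $\mu(\q)\geq\epsilon>0$, so $\mu$ is strictly positive and $\J_\mu$ vanishes exactly where $\bm{g}$ does. I would then close by arguing that $\bm{g}(\q)\neq 0$ everywhere on the nonsingular stratum, either via a direct structural manipulation of the trace expression combined with the $(n-m)$-dimensional redundancy of the arm, or by observing that the critical set of $\mu$ forms a lower-dimensional subvariety that can be excluded by a generic choice of $\epsilon$.

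The main obstacle is precisely this last step: $\mu$ is a smooth positive scalar field on the nonsingular stratum and in principle can admit interior critical points (local maxima, saddles) at which $\J_\mu=0$ even though $\mu>\epsilon$. At such a point the inequality (\ref{eq:our_cbf_constraint}) degenerates into a $\qdd$-independent condition whose feasibility hinges on $\qd$ and on the sign of $\dot{\J}_\mu\qd + K_{\alpha,11}h$, which is not automatic in general. I therefore expect the cleanest path to be to exploit the specific kinematic structure of the serial manipulator---expanding $\partial\J/\partial q_i$ in terms of joint twists and using rank properties of $\J^\dagger$---to rule out simultaneous vanishing of all $g_i$ on $\{h\geq 0\}$, or alternatively to state the result with a mild genericity qualifier that is satisfied by the 7-DoF Kinova Gen3 used in the experiments.
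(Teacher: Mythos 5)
Your reduction is exactly the argument the paper gives: rewrite (\ref{eq:our_cbf_constraint}) as the single scalar inequality $\J_\mu\qdd \geq b$ with $b = -\dot{\J}_\mu\qd - \alpha_1(\mu(\q)-\epsilon) - \alpha_2\J_\mu\qd$, note that Assumption~\ref{assumption:torque_limits} lets any desired $\qdd$ be realized through the dynamics, and conclude feasibility from $\J_\mu \neq 0$. Where you and the paper part ways is on that last claim. The paper simply asserts that ``$\J_\mu$ is a nonzero $1\times n$ matrix,'' pointing at the closed-form gradient (\ref{eq:mu_gradient}), and stops there. You correctly observe that (\ref{eq:mu_gradient}) only factors the gradient as $\mu(\q)$ times a vector of traces, and that the hypothesis $h(\q)\geq 0$ guarantees $\mu(\q)\geq\epsilon>0$ but says nothing about the trace factors. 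A smooth positive scalar field on the nonsingular stratum generically does have interior critical points (a manipulability-maximizing posture is the obvious one), and at such a point the constraint degenerates to $0 \geq b$, i.e.\ $\qd^T(\nabla^2\mu)\qd \geq -\alpha_1(\mu-\epsilon)$, which fails for sufficiently large $\qd$ since the Hessian at a local maximum is negative semidefinite. So the obstacle you flag is real, and it is not resolved in the paper either.

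In short: your proposal is incomplete exactly where the published proof is incomplete, and you are the one who noticed. Neither of your two suggested repairs (a structural twist-based argument ruling out simultaneous vanishing of the traces, or a genericity qualifier on $\epsilon$ and the robot) appears in the paper. If you want to close the gap honestly, the genericity route is the more tractable one, but note that raising $\epsilon$ does not help by itself: critical points of $\mu$ with large $\mu$-value remain inside $\{h\geq 0\}$ no matter how $\epsilon$ is chosen, so one would instead need an added hypothesis such as restricting to a sublevel region free of critical points, or weakening the proposition to feasibility away from the critical set. As a referee-style judgment: your write-up identifies a genuine flaw in the paper's Proposition~\ref{prop:ecbf} rather than containing one of its own making, but as it stands it does not constitute a complete proof of the statement.
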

\begin{proof}
    We will show that for any configuration $\q$ and velocity $\qd$, we can select joint accelerations $\qdd$ such that (\ref{eq:our_cbf_constraint}) holds. Note that under Assumption \ref{assumption:torque_limits}, joint torques $\btau$ can always be selected to be consistent with any desired accelerations $\qdd^{des}$.
    
    With $\K_\alpha = [\alpha_1 ~ \alpha_2]$, we can write (\ref{eq:our_cbf_constraint}) as
    \begin{gather*}
        \ddot{h}(\q,\qd,\qdd) \geq -\alpha_1 h(\q) - \alpha_2 \dot{h}(\q,\qd), \\
        \J_\mu\qdd + \dot{\J}_\mu\qd \geq - \alpha_1 (\mu(\q) - \epsilon) - \alpha_2 \J_\mu \qd, \\
        \J_\mu\qdd \geq b
    \end{gather*}
    where $b = - \dot{\J}_\mu\qd - \alpha_1 (\mu(\q) - \epsilon) - \alpha_2 \J_\mu \qd$.
    
    Since $\J_\mu$ is a nonzero $1 \times n$ matrix (\ref{eq:mu_gradient}), $\qdd$ can be selected such that $\J_\mu \qdd \geq b$ for any $b \in \mathbb{R}$. Thus the ECBF constraint (\ref{eq:our_cbf_constraint}) is always feasible and the proposition holds. 
\end{proof}

With this in mind, we can guarantee singularity avoidance by including (\ref{eq:our_cbf_constraint}) as a constraint in an optimization-based controller like (\ref{eq:standard_pbc_qp}). Once the singularity avoidance constraints become active, however, any passivity guarantees are lost. In the following section, we provide an alternative optimization-based control scheme that guarantees both singularity avoidance and passivity. 

\begin{figure}
    \centering
    \begin{tikzpicture}
        \node[draw, rectangle, minimum width=2.5cm, minimum height=1.5cm] (qp) at (0,0) {QP (\ref{eq:our_qp})};
        \node[draw, rectangle, below=0.3cm of qp, minimum width=3.5cm,  minimum height=0.8cm] (robot) {Robot (\ref{eq:wholebody_dynamics})};
        \node[draw, rectangle, above=0.5cm of qp, minimum width=3.5cm, minimum height=0.8cm] (ref) {Reference System (\ref{eq:reference_system})};
        \node[draw, rectangle, above left=-0.4cm and 1.4cm of qp] (ctrl) {\begin{tabular}{c}
             Reference \\
             Controller 
        \end{tabular}};
        
        \draw[->,thick] ([yshift=0.2cm]qp.east) -- ++(1,0) |- node[pos=0.25,right] {$\u_r$} (ref.east);
        \draw[->,thick] ([yshift=-0.2cm]qp.east) -- ++(1,0) |- node[pos=0.25,right] {$\btau$} (robot.east);
        \draw[->,thick] (robot.west) -- ++(-0.5,0) |- node[pos=0.3, left] {$\q,\qd$} ([yshift=-0.3cm]qp.west);
        \draw[->,thick] (ctrl.south) |- node[pos=0.5,below] {$\u_r^{nom}$} ([yshift=0.0cm]qp.west);
        \draw[->,thick] (ref.west) -| node[pos=0.3,above] {$\x_r,\xd_r$} (ctrl.north);
        \draw[->,thick] (ref.west) -- ++ (-0.5,0) |- ([yshift=0.3cm]qp.west);
    \end{tikzpicture}
    \caption{Diagram outlining our proposed control approach. Rather than tracking a reference trajectory, we track a reference system, the input of which is modified when necessary to enforce passivity and constraint satisfaction.}
    \label{fig:control_flow}
\end{figure}
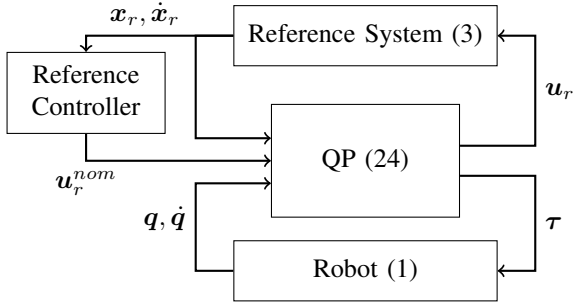

\subsection{Passivity-Guaranteed Optimization-Based Control}

In this section, we present an alternative control strategy to the standard optimization for constrained PBC (\ref{eq:standard_pbc_qp}) that guarantees passivity even when singularity avoidance constraints are active. Our key insight is to use a reference system rather than a reference trajectory, and to modify the input to this reference system when necessary for constraint satisfaction. 

A block diagram outline of this control scheme is shown in Figure \ref{fig:control_flow}. A reference controller provides a nominal input to the reference system, $\u_r^{nom}$. Our QP-based controller then takes this nominal reference input along with the robot's current state ($\q,\qd$) and selects joint torques $\btau$ and an input to the reference system $\u_r$. This method of selecting $\u_r$ allows us to automatically avoid task-space configurations that inevitably lead to singularities, like those shown in Figures \ref{fig:singular_config_1} and \ref{fig:singular_config_3}.

To formulate this QP-based controller, note that $\Jbar^T\btau = \f^{des}$, where $\f^{des}$ is given by (\ref{eq:controller}), is not strictly necessary for passivity. All we really need for passivity is for $\dot{V} \leq 0$ in the absence of external disturbances, which ensures that $\dot{V} \leq \xderr^T\f_{ext}$ in the presence of disturbances. 

Furthermore, note that $\dot{V}$ is linear in $\qdd$ (\ref{eq:vdot}). This allows us to add a constraint on $\dot{V}$ in a QP as follows:
\begin{align*}
   \min_{\btau,\qdd} ~& \|\Jbar^T\btau - \f^{des}\|^2 \\
   \text{s.t. } & \M\qdd + \C\qd + \btau_g = \btau \\
                & \dot{V}(\qdd) \leq 0 \\
                & \text{Singularity Avoidance }(\ref{eq:our_cbf_constraint}).
\end{align*}
This optimization, which is a convex QP, ensures that both passivity and singularity avoidance hold for any solution. There is a problem, however: the singularity avoidance constraint (\ref{eq:our_cbf_constraint}) and the passivity constraint $\dot{V} \leq 0$ may be in conflict, leading to an infeasible QP.

To avoid this issue, we treat the input to the reference system $\u_r$, as an optimization variable. This is inspired by a similar approach in our prior work on whole-body control for humanoid walking\cite{kurtz2020approximate}. Note that (\ref{eq:vdot}) is also linear in $\u_r = \xdd_r$: thus $\dot{V}$ is still linear in the decision variables. 

Our proposed optimization-based controller is thus given by
\begin{equation}\label{eq:our_qp}
\begin{aligned}
   \min_{\btau,\qdd,\u_r} ~& w_1\|\u_r - \u_r^{nom}\|^2 + w_2\|\Jbar^T\btau - \f^{des}\|^2 \\
   \text{s.t. } & \M\qdd + \C\qd + \btau_g = \btau \\
                & \dot{V}(\qdd, \u_r) \leq 0 \\
                & \text{Singularity Avoidance }(\ref{eq:our_cbf_constraint}).
\end{aligned}
\end{equation}
The scalar, positive weights $w_1$ and $w_2$ regulate the relative priorities of tracking the nominal input to the reference system and applying the standard PBC controller given by (\ref{eq:controller}). 

We assume that the resulting controller is continuous:
\begin{assumption}\label{assumption:lipschitz}
    The control law generated by sequentially solving (\ref{eq:our_qp}) is locally Lipschitz.
\end{assumption}
This standard continuity assumption is necessary for ensuring forward invariance with ECBFs \cite{ames2019control}.

Our controller has several desirable properties. First, it is always feasible:
\begin{proposition}\label{prop:qp_feasibility}
    The quadratic program (\ref{eq:our_qp}) has a feasible solution for any non-singular joint configuration $\q$.
\end{proposition}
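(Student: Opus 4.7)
The plan is to exploit the extra decision freedom gained by treating $\u_r$ as an optimization variable. The dynamics equality $\M\qdd+\C\qd+\btau_g=\btau$ merely defines $\btau$ as a function of $\qdd$, so it imposes no real restriction; it therefore suffices to exhibit a pair $(\qdd,\u_r)$ satisfying both the singularity-avoidance constraint (\ref{eq:our_cbf_constraint}) and the passivity constraint $\dot V(\qdd,\u_r)\le 0$, and then recover $\btau$ from the dynamics.

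First I would invoke Proposition \ref{prop:ecbf}: at any non-singular $\q$, the manipulability Jacobian $\J_\mu$ is a nonzero $1\times n$ row vector, so the scalar ECBF constraint (\ref{eq:our_cbf_constraint}) can be satisfied by some $\qdd^\star$, chosen without reference to $\u_r$.

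The main step, and the one I expect to require the most care, is showing that for this fixed $\qdd^\star$ some $\u_r$ satisfies $\dot V(\qdd^\star,\u_r)\le 0$. From (\ref{eq:vdot}) together with $\u_r=\xdd_r$, $\dot V$ is affine in $\u_r$ with coefficient $-\xderr^T\LL$. I would split into two cases. If $\xderr=0$, every term in (\ref{eq:vdot}) carries a factor of $\xderr^T$, so $\dot V=0$ for any $\u_r$ and one may simply take $\u_r=\u_r^{nom}$. Otherwise, non-singularity of $\q$ ensures that $\J$ has full row rank, so $\LL=(\J\M^{-1}\J^T)^{-1}$ is well-defined and symmetric positive definite; hence $\xderr\ne 0$ forces $\xderr^T\LL\ne 0$ as a row vector, and the affine map $\u_r\mapsto -\xderr^T\LL\u_r$ is surjective onto $\mathbb{R}$. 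A concrete choice such as $\u_r=c\,\xderr$ with $c>0$ sufficiently large then drives $\dot V(\qdd^\star,\u_r)\le 0$, and lifting the pair to $(\btau,\qdd^\star,\u_r)$ through the dynamics yields a feasible point.

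The conceptual obstacle the argument has to clear is the potential tension between the singularity-avoidance and passivity constraints, either of which could in principle block the other. The resolution, which is exactly the reason (\ref{eq:our_qp}) elevates $\u_r$ to a decision variable rather than leaving it exogenous, is that once $\qdd$ has been used up satisfying the single scalar inequality (\ref{eq:our_cbf_constraint}), the $m$-dimensional reference input $\u_r$ retains more than enough freedom to drive $\dot V$ below zero at a non-singular configuration.
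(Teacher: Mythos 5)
Your proposal is correct and follows essentially the same route as the paper's proof: use Proposition \ref{prop:ecbf} to handle the singularity-avoidance constraint with $\qdd$ alone, then exploit the linearity of $\dot V$ in $\u_r$ through the term $-\xderr^T\LL\u_r$ and the positive definiteness of $\LL$, with the same case split on $\xderr = 0$ versus $\xderr \neq 0$. Your explicit witness $\u_r = c\,\xderr$ is a slightly more concrete rendering of the paper's argument, but there is no substantive difference.
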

\begin{proof}
    From Proposition \ref{prop:ecbf}, we know that $\qdd$ and $\btau$ can always be found such that the dynamics constraint (\ref{eq:wholebody_dynamics}) and the singularity avoidance constraint (\ref{eq:our_cbf_constraint}) both hold. We now show that for any $(\q, \qd, \qdd)$, $\u_r$ can be selected such that $\dot{V} \leq 0$. 
    
    Recall from (\ref{eq:vdot}) that if $\xderr = 0$, then $\dot{V} = 0$ and the constraint holds. If $\xderr \neq 0$, then $\u_r$ enters only in the term
    \begin{equation*}
        -\xderr^T \LL \u_r.
    \end{equation*}
    Together with the fact that $\LL$ is positive definite, this means that $\u_r$ can always be selected such that $\dot{V} \leq 0$, regardless of the other terms in $\dot{V}$ which depend on $\q,\qd,\qdd$.
\end{proof}

Furthermore, any satisfying solution to this optimization enforces both passivity and singularity avoidance, as shown in the following propositions:
\begin{proposition}\label{prop:qp_passivity}
    For $\btau$ selected as solutions of (\ref{eq:our_qp}), the closed-loop system with input $\f_{ext}$ and output $\xderr$ is passive under the storage function (\ref{eq:storage function}).
\end{proposition}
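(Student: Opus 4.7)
The plan is to show that along the closed-loop trajectory, the storage function satisfies $\dot V \leq \xderr^T \f_{ext}$, which is the passivity inequality with input $\f_{ext}$ and output $\xderr$. The QP (\ref{eq:our_qp}) directly enforces $\dot V \leq 0$, but only for the acceleration consistent with the disturbance-free dynamics. The main subtlety is that in the actual closed-loop system, the disturbance $\btau_{ext}$ perturbs the true $\qdd$ away from the value used inside the QP, so I must carefully separate these two quantities and compute the mismatch contribution to $\dot V$.

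First, I would verify that $V$ in (\ref{eq:storage function}) is a valid storage function: it is continuously differentiable and positive definite (since $\LL$ and $\K_P$ are positive definite), hence a fortiori positive semidefinite. Next, let $\btau$ be the torque returned by the QP at state $(\q,\qd)$, with auxiliary decision variables $\qdd_{qp}$ and $\u_r$ that are feasible in (\ref{eq:our_qp}). By feasibility, the value $\dot V(\qdd_{qp},\u_r) \leq 0$ computed from (\ref{eq:vdot}) with these arguments holds. Under the disturbed dynamics $\M\qdd + \C\qd + \btau_g = \btau + \btau_{ext}$, the realized acceleration is $\qdd = \qdd_{qp} + \M^{-1}\btau_{ext}$.

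Next I would evaluate the derivative of $V$ along the actual trajectory. Because the expression in (\ref{eq:vdot}) is affine in $\qdd$ with coefficient $\xderr^T \LL \J$, we obtain
\begin{equation*}
    \dot V_{\text{actual}} = \dot V(\qdd_{qp},\u_r) + \xderr^T \LL \J \M^{-1}\btau_{ext}.
\end{equation*}
Using $\Jbar = \M^{-1}\J^T\LL$ together with symmetry of $\LL$, one has $\Jbar^T = \LL \J \M^{-1}$, so the disturbance contribution simplifies to $\xderr^T \Jbar^T \btau_{ext} = \xderr^T \f_{ext}$. Combining this with the QP constraint $\dot V(\qdd_{qp},\u_r) \leq 0$ yields $\dot V_{\text{actual}} \leq \xderr^T \f_{ext}$, which is exactly the passivity inequality (\ref{eq:general_passivity_property}) with input $\f_{ext}$ and output $\xderr$.

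The main obstacle, in my view, is bookkeeping around the two distinct accelerations: the QP-internal $\qdd_{qp}$ (determined by $\btau$ under the disturbance-free dynamics constraint in (\ref{eq:our_qp})) and the physically realized $\qdd$ in the presence of $\btau_{ext}$. Once this distinction is stated clearly and one notices that the storage function $V$ depends on $\qd$ only through $\xderr$ so that $\partial V/\partial \qd = \xderr^T \LL \J$, the rest of the argument reduces to the algebraic identity $\LL \J \M^{-1} = \Jbar^T$. A minor additional point worth mentioning is that Proposition \ref{prop:qp_feasibility} guarantees the QP is always feasible on nonsingular configurations, so $\dot V(\qdd_{qp},\u_r) \leq 0$ is in fact attainable whenever the controller is invoked.
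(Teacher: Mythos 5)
Your proof is correct and follows the same route as the paper, which simply states that the result ``follows trivially from the inclusion of the constraint $\dot{V}(\qdd,\u_r)\leq 0$'' and implicitly relies on the disturbance computation already carried out in Section \ref{subsec:tpbc} (equation (\ref{eq:passivity_wrt_disturbance})). Your explicit bookkeeping of the QP-internal acceleration versus the realized acceleration $\qdd = \qdd_{qp} + \M^{-1}\btau_{ext}$, together with the identity $\LL\J\M^{-1} = \Jbar^T$, is exactly the justification the paper leaves unstated.
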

\begin{proof}
    This proposition follows trivially from the inclusion of the constraint $\dot{V}(\qdd,\u_r) \leq 0$.
\end{proof}
\begin{proposition}\label{prop:qp_singularity_avoidance}
    The controller (\ref{eq:our_qp}) renders the set $\mu(\q) \geq \epsilon$ forward invariant.
\end{proposition}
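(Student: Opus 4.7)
The plan is to reduce this statement to a direct application of the Exponential CBF forward-invariance theorem (\cite[Theorem 2]{nguyen2016exponential}) by checking that each of its hypotheses is inherited from the earlier propositions and assumptions of the paper. Forward invariance of $\{\q : \mu(\q) \geq \epsilon\} = \{\q : h(\q) \geq 0\}$ follows provided (i) the closed-loop trajectories are well-defined and unique in a neighborhood of the set, (ii) the ECBF constraint (\ref{eq:our_cbf_constraint}) is satisfied pointwise along those trajectories, and (iii) the gain $\K_\alpha$ meets the regulatory conditions (Hurwitz closed-loop matrix and total negativity) already imposed when the barrier was introduced.

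First I would observe that the set in question lies strictly inside the nonsingular set, since $\epsilon > 0$, so by Proposition \ref{prop:qp_feasibility} the QP (\ref{eq:our_qp}) admits a feasible solution at every $\q$ with $h(\q) \geq 0$. In particular, (\ref{eq:our_cbf_constraint}) is explicitly included as a hard constraint of (\ref{eq:our_qp}), so any solution $(\btau,\qdd,\u_r)$ produced by the QP necessarily satisfies this constraint. Hence along every closed-loop trajectory the ECBF inequality holds pointwise in time.

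Next I would invoke Assumption \ref{assumption:lipschitz}, which guarantees that the map from $(\q,\qd)$ to the applied torque $\btau$ is locally Lipschitz; combined with the smoothness of $\M$, $\C$, and $\btau_g$ in (\ref{eq:wholebody_dynamics}), this ensures existence and uniqueness of closed-loop trajectories in a neighborhood of the safe set, which is the regularity condition required by the ECBF machinery. Together with the choice of $\K_\alpha$ already required to satisfy the regulatory conditions of \cite[Theorem 2]{nguyen2016exponential} in the discussion following (\ref{eq:our_cbf_constraint}), all hypotheses of that theorem are met, and forward invariance of $\{h(\q) \geq 0\}$, equivalently $\{\mu(\q) \geq \epsilon\}$, follows.

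The main obstacle is essentially bookkeeping rather than mathematics: one has to be careful to note that Proposition \ref{prop:qp_feasibility} only guarantees feasibility at nonsingular $\q$, so the forward-invariance conclusion must be phrased as trajectories starting in $\{\mu(\q) \geq \epsilon\}$ remaining in that set, not as a statement about arbitrary initial conditions. Once that scoping is made explicit, the rest is a direct citation of \cite[Theorem 2]{nguyen2016exponential}.
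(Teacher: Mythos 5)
Your proposal is correct and follows essentially the same route as the paper, whose proof simply states that the result follows from the inclusion of constraint (\ref{eq:our_cbf_constraint}) in the QP, Assumption \ref{assumption:lipschitz}, and Proposition \ref{prop:ecbf}. You have merely unpacked that one-line argument into its constituent checks (pointwise constraint satisfaction, feasibility via Proposition \ref{prop:qp_feasibility}, Lipschitz regularity, and the gain conditions on $\K_\alpha$), which is a faithful elaboration rather than a different proof.
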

\begin{proof}
    This proposition follows trivially from the inclusion of the constraint (\ref{eq:our_cbf_constraint}), Assumption \ref{assumption:lipschitz}, and Proposition \ref{prop:ecbf}. 
\end{proof}

Note that while our focus here is on singularity avoidance, a similar strategy could also be employed to enforce other constraints, such as joint angle limits \cite{cortez2020correct} or contact/friction constraints \cite{wieber2016modeling}, along with passivity. 

\section{Simulation Results}\label{sec:simulation}

We illustrate our proposed control approach in simulation using a 7-DoF model of the Kinova Gen3 Robot Arm. We use Drake \cite{drake} and python for simulation and dynamics computations. Code is available at \cite{github} and an online interactive demonstration can be found at \cite{colab}. 

We consider the task-space state $\x \in \mathbb{R}^6$ to consist of the position and orientation (Euler angles) of the end-effector. The reference controller used to generate $\u_r^{nom}$ is a simple PD controller that attempts to guide the end-effector to a desired pose $\x_r^{des}$:
\begin{equation}\label{eq:reference_controller}
    \u_r^{nom} = -\K_P^{ref}\left(\x_r - \x_r^{des}\right)  - \K_D^{ref}\xd_r,
\end{equation}
where we used values of $\K_P^{ref} = \K_D^{ref} = 2\bm{I}$.

\begin{figure}
    \centering
    \begin{subfigure}{0.48\linewidth}
        \centering
        \includegraphics[width=\linewidth]{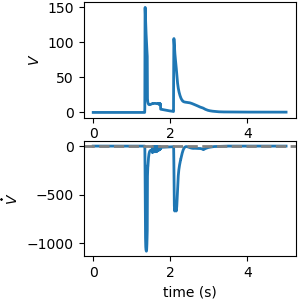}
    \end{subfigure}
    \begin{subfigure}{0.48\linewidth}
        \includegraphics[width=\linewidth]{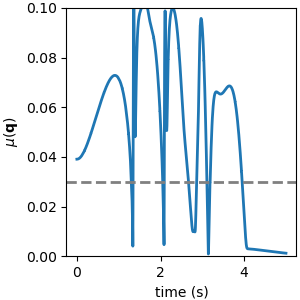} 
    \end{subfigure}
    \caption{Unconstrained PBC: storage function $V$ (\ref{eq:storage function}), storage function derivative $\dot{V}$ (\ref{eq:vdot}), and manipulability index $\mu$ (\ref{eq:manipulability}) over time under the standard unconstrained passivity-based controller (\ref{eq:controller}). Passivity is guaranteed ($\dot{V} \leq 0$), but the robot takes extreme motions in near-singular configurations, resulting in significantly degraded performance.}
    \label{fig:plots_unconstrained}
\end{figure}

\begin{figure}
    \centering
    \begin{subfigure}{0.48\linewidth}
        \centering
        \includegraphics[width=\linewidth]{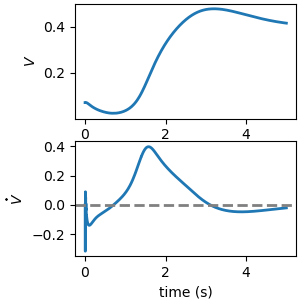}
    \end{subfigure}
    \begin{subfigure}{0.48\linewidth}
        \includegraphics[width=\linewidth]{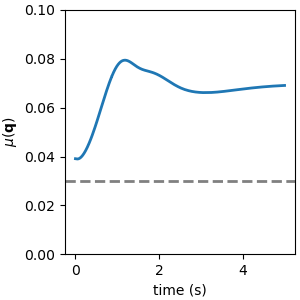} 
    \end{subfigure}
    \caption{Unconstrained PBC with damped Jacobian pseudoinverse: storage function $V$ (\ref{eq:storage function}), storage function derivative $\dot{V}$ (\ref{eq:vdot}), and manipulability index $\mu$ (\ref{eq:manipulability}) over time under the standard unconstrained passivity-based controller (\ref{eq:controller}) with a damped Jacobian pseudoinverse. The robot avoids singularity $(\mu > 0)$, but passivity is not guaranteed ($\dot{V} > 0$).}
    \label{fig:plots_dls}
\end{figure}

We compare our proposed approach (\ref{eq:our_qp}) with unconstrained PBC (\ref{eq:controller}) and standard constrained PBC (\ref{eq:standard_pbc_qp}). In our controller and the standard constrained approach, we used $\epsilon = 0.03$ and solved the QPs at roughly 300Hz using the OSQP solver \cite{osqp}. For our controller, we use weights of $w_1 = 1$ and $w_2 = 10$. 

\begin{remark}
    Our proposed controller requires computing several standard quantities, such as $\M,\C\qd,\btau_g$, and $\J$, for which efficient recursive algorithms are available \cite{featherstone2014rigid}, as well as some less standard quantities like $\C$, $\dot{\J}$, and $\dot{\J}_\mu$. We use Drake's automatic differentiation features to derive these quantities (finite differences in the case of $\dot{\J}_\mu$), though more efficient algorithms for some of these quantities do exist \cite{echeandia2020numerical}.
\end{remark}

Starting from the same initial condition, we used (\ref{eq:reference_controller}) to regulate the reference system to a desired end-effector pose ($\x_r^{des}$) outside the reachable workspace of the robot. Tracking this reference requires entering a singular configuration.

Plots of the storage function $V$ and manipulability index $\mu(\q)$ over time for each approach are shown in Figures \ref{fig:plots_unconstrained}-\ref{fig:plots_ours}. For the unconstrained PBC controller (\ref{eq:controller}, Figure \ref{fig:plots_unconstrained}), passivity is guaranteed but singularity avoidance is not. This is demonstrated by the fact that $\dot{V} \leq 0$ over the whole trajectory, but $\mu(\q)$ comes close to zero at several points. In these near-singular configurations, the controller (\ref{eq:controller}) requires extreme joint torques to keep $\dot{V} \leq 0$, leading to extreme motions of the robot arm and the corresponding jumps in $V$. These jumps are possible because the controller is applied in discrete-time, leading to a discrepancy between $V$ as computed by (\ref{eq:storage function}) and $\dot{V}$ as computed by (\ref{eq:vdot}) when $\btau$ is very large.

We also consider unconstrained PBC with a damped Jacobian pseudoinverse (\ref{eq:damped_jacobian_pinv}), using damping constant $\delta = 0.001$. This approach is shown in Figure \ref{fig:plots_dls}. This is a common method of ensuring feasibility and numerical stability in singular/near-singular configurations, but comes at the price of degraded performance. Passivity is not guaranteed under this approach, as evidenced by positive values of $\dot{V}$.

\begin{figure}
    \centering
    \begin{subfigure}{0.48\linewidth}
        \centering
        \includegraphics[width=\linewidth]{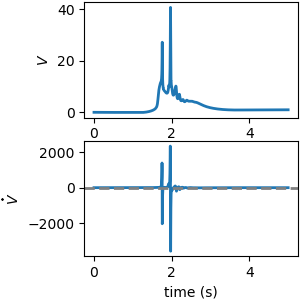}
    \end{subfigure}
    \begin{subfigure}{0.48\linewidth}
        \includegraphics[width=\linewidth]{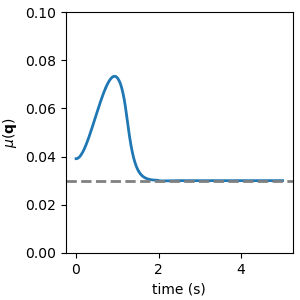} 
    \end{subfigure}
    \caption{Standard constrained PBC: storage function $V$ (\ref{eq:storage function}), storage function derivative $\dot{V}$ (\ref{eq:vdot}), and manipulability index $\mu$ (\ref{eq:manipulability}) over time under the standard constrained passivity-based control scheme (\ref{eq:standard_pbc_qp}). Singularity avoidance is guaranteed ($\mu \geq \epsilon$), but passivity is not.}
    \label{fig:plots_constrained}
\end{figure}

The standard constrained PBC method (\ref{eq:standard_pbc_qp}, Figure \ref{fig:plots_constrained}), guarantees singularity avoidance but not passivity. This is shown by the fact that $\mu(\q)$ stays above $\epsilon$ (grey dashed line). When the singularity avoidance constraints become active, however, the passivity properties are lost. This is demonstrated by positive values of $\dot{V}$ and corresponding increases in $V$.

\begin{figure}
    \centering
    \begin{subfigure}{0.48\linewidth}
        \centering
        \includegraphics[width=\linewidth]{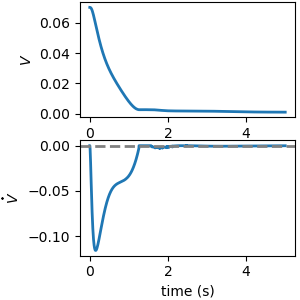}
    \end{subfigure}
    \begin{subfigure}{0.48\linewidth}
        \includegraphics[width=\linewidth]{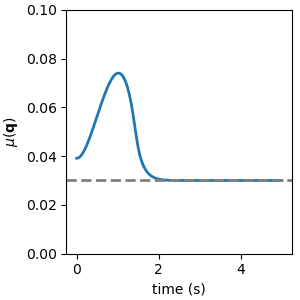} 
    \end{subfigure}
    \caption{Proposed approach: storage function $V$ (\ref{eq:storage function}), storage function derivative $\dot{V}$ (\ref{eq:vdot}), and manipulability index $\mu$ (\ref{eq:manipulability}) over time under our proposed approach. Both passivity ($\dot{V} \leq 0$) and singularity avoidance ($\mu \geq \epsilon$) are enforced.}
    \label{fig:plots_ours}
\end{figure}

Finally, our proposed method (Figure \ref{fig:plots_ours}) guarantees both passivity and singularity avoidance. Both $\dot{V} \leq 0$ and $\mu(\q) \geq \epsilon$ hold throughout the trajectory. This is possible because we modify the reference input $\u_r$ as the system approaches singularity, so the reference end-effector pose $\x_r$ never leaves the robot's reachable workspace. 

\section{Conclusion}\label{sec:conclusion}

In this paper, we proposed a new optimization-based strategy for constrained PBC with guaranteed feasibility, passivity, and singularity avoidance. Our key insights are to constrain the evolution of the storage function ($\dot{V}$) in a convex QP and to modify the input to a reference system when necessary. While we focus on singularity avoidance, the proposed methods can also be applied to other constraints such as joint limits and contact/friction constraints. Future work will focus on hardware implementation, extensions to legged locomotion, and principled methods of including multiple constraints. 

\bibliographystyle{IEEEtran}
\bibliography{references}

\end{document}